\documentclass[letterpaper, 10pt, conference]{ieeeconf}  

\IEEEoverridecommandlockouts                              

\usepackage{mathtools}
\usepackage{amsmath,amssymb,amsfonts}

\usepackage{amsthm}
\newtheorem{theorem}{Theorem}[section]

\newtheorem{assumption}{Assumption}

\newcommand\given[1][]{\:#1\vert\:}

\usepackage{booktabs}
\usepackage{xcolor,subcaption}

\usepackage{hyperref}

\title{\LARGE \bf
Curvature-aware Expected Free Energy as an Acquisition Function for Bayesian Optimization
}

\author{Ajith Anil Meera and Wouter M. Kouw
\thanks{ {This project was supported by the Dutch Research Council (NWO) under grant AiNed XS Europe (number NGF.1609.243.072). Both authors are with the Faculty of Electrical Engineering, TU Eindhoven, The Netherlands,
        {\tt\small a.m.anil1@tue.nl}}}%
}

\begin{document}

\maketitle
\thispagestyle{empty}
\pagestyle{empty}

\begin{abstract}
We propose an Expected Free Energy-based acquisition function for Bayesian optimization to solve the joint learning and optimization problem, i.e., optimize and learn the underlying function simultaneously. We show that, under specific assumptions, Expected Free Energy reduces to Upper Confidence Bound, Lower Confidence Bound, and Expected Information Gain. We prove that Expected Free Energy has unbiased convergence guarantees for concave functions. Using the results from these derivations, we introduce a curvature-aware update law for Expected Free Energy and show its proof of concept using a system identification problem on a Van der Pol oscillator.  On a two-dimensional benchmark with an oscillatory landscape, our adaptive Expected Free Energy acquisition achieves competitive performance in both regret and mean squared error, unlike the typical acquisition functions that perform well in only one metric.
\end{abstract}

\section{INTRODUCTION}
Joint optimization and learning is central to robotics and control, where an agent must acquire an accurate map of the environment (or a phenomenon) and identify high‑value regions (e.g., areas of high human occupancy in search‑and‑rescue or high detection likelihood in target search). Fast and efficient information gathering can improve productivity in precision agriculture, save lives in search and rescue operations, and can aid industrial inspection and maintenance \cite{popovic2024learning}. Since real‑world queries are costly, it is crucial to develop methods that efficiently query at the fastest rate for joint optimization and learning. However, most acquisition functions (AFs) are designed to perform efficient optimization and not to accurately learn the underlying function \cite{frazier2018tutorial}. We aim to bridge this gap by proposing an acquisition function for Bayesian Optimization (BO), that balances exploration and exploitation using the curvature information, not just to optimize, but also to learn the underlying function.
Taking inspiration from active inference \cite{friston2010free}, the leading computational neuroscience theory on how our brain performs inference, we propose Expected Free Energy (EFE) as a new acquisition function. We point towards EFE being a general acquisition function from which other acquisition functions such as Upper Confidence Bound (UCB) \cite{srinivas2012information}, Lower Confidence Bound (LCB) and Expected Information Gain (EIG) can be derived.

BO has an extensive literature on AFs for black-box optimization \cite{garnett2023bayesian}, including information-theoretic regret bounds \cite{srinivas2012information}. Since BO is particularly useful for optimizing multimodal functions that are expensive to evaluate, it has seen widespread use in control systems to tune controllers \cite{khosravi2021performance} and for system identification \cite{chen2025fast}. The idea of using gradient information to improve BO has also been extensively studied, especially with regard to observed gradients \cite{wu2017exploiting,wu2017bayesian} and posterior gradients \cite{ip2026expected}. Our method differs from these approaches as we use the second gradient of the Gaussian process (GP) posterior to adapt the balance between exploration and exploitation. There has been a growing interest in using the ideas from active inference for BO. Recent related work proposed an active inference-based AF by placing a Boltzmann distribution-based energy function on the preference distribution \cite{li2026pragmatic}. 
Our work differs in that we start our EFE derivation specifically from a Gaussian preference prior, as a special case. This route is more elegant because a simple linearization step directly recovers UCB and LCB, without having to make unrealistic assumptions \cite{li2026pragmatic}. We further improve on this EFE by proposing a curvature-aware rule to improve the performance. Another similar work is \cite{kellyboba}, which proposed an active inference-based AF. They used the free energy of the expected future \cite{millidge2021whence} as the objective, instead of EFE, which is fundamentally different from our formulation, and missed out on uncovering the equivalence between UCB and active inference. The core novelty of our work is the use of curvature information within the EFE acquisition function to balance between exploration and exploitation for a joint optimization and learning problem.

We mathematically show that the stationary point of EFE is curvature agnostic. Based on this result, we propose a novel curvature-aware adaptive rule for EFE that encourages resampling in high curvature regions and exploring uncertain regions with low curvature. The core contributions of this paper include: i) a proof that, under specific assumptions, EFE reduces to UCB, LCB and EIG (Sec. \ref{sec:proofs_EIF2AF}), ii) a proof for EFE's unbiased convergence on concave functions (Sec. \ref{sec:EFE_convergence}), iii) introduction of a curvature-aware update rule for EFE (Sec. \ref{sec:adaptiveEFE}), and iv) showing that our adaptive (curvature-aware) EFE  {is competitive with widely-used AFs} for joint optimization and learning (Sec. \ref{sec:experiments}).

\section{PROBLEM STATEMENT}
{We consider BO under a Gaussian process (GP) model for an
unknown latent function $f:\mathcal{X}\to\mathbb{R}$  {with $\mathcal{X}\subseteq\mathbb{R}^d$}}. Conditioned on data $\mathcal{D}$, the GP
posterior at any location $x\in\mathcal{X}$ is
$
f(x)\,\big|\,\mathcal{D} ~\sim~ \mathcal{N}\!\big(\mu(x),\,\sigma^2(x)\big),
$
where $\mu(x)$ and $\sigma^2(x)$ denote the posterior mean and latent posterior
variance, respectively. Observations are corrupted by additive Gaussian noise,
$
y \;=\; f(x) + \varepsilon, 
\  
\varepsilon\sim\mathcal{N}(0,\sigma_n^2),
$
which makes the predictive distribution over noisy observations
\[
q(y\given x) 
= 
\mathcal{N}\!\big(y \given \mu(x),\,\sigma_y^2(x)\big),
\quad  
\sigma_y^2(x) := \sigma^2(x)+\sigma_n^2.
\]
We pose the problem of joint learning and optimization as choosing $x_{t+1} \in \mathcal{X}$ sequentially so that it jointly improves $\mu(x)$ as an estimator of $f$ and maximizes $f(x)$.

\section{EFE AND OTHER ACQUISITION FUNCTIONS}
This section derives EFE for a GP and shows that UCB and LCB are linear surrogates of EFE.

\subsection{EFE under a GP Model}
We view optimization of an unknown function as a problem in which an agent must learn from interactions with its environment to reach a goal. Exploration refers to learning the function (discovering its shape), and exploitation refers to optimization. Action is represented by the choice of the next query location, and the sensor observation is represented by the value of the unknown function at that query location. We implicitly assume that the environment is time-invariant, i.e., $f(x)$ does not change depending on which query points are selected. 
 {Active inference frames perception--action as joint minimization of EFE, which decomposes into a pragmatic term (preference matching) and an epistemic term (information gain) \cite{friston2010free}. This trade-off motivates our choice of EFE as an AF for joint optimization and learning.} We define a one-step ahead (myopic) EFE for a candidate query location $x$ on a GP as \cite{friston2015active}:
\begin{equation}
\label{eq:EFE_def_initial}
\begin{split}
G(x)
~=~
&\underbrace{
\mathbb{E}_{q(y\given x)}
\!\left[-\ln p(y)\right]
}_{\text{pragmatic value}}
~-~
\\&
\underbrace{
\mathbb{E}_{q(y\given x)}
\!\left[
\mathrm{KL}\!\left(
q(f\given y,x)\,\big\|\,q(f\given x)
\right)
\right]
}_{\text{epistemic value}},
\end{split}
\end{equation}
where $\text{KL}(\cdot)$ defines the KL-divergence between two distributions.
The pragmatic term quantifies the expected deviation from the preferred observations,
while the epistemic term measures the expected reduction in uncertainty about the
latent function following an observation at~$x$.
We set a Gaussian preference over the outcomes as
$
p(y)
~=~
\mathcal{N}\!\big(y^\ast,\,\tau^2\big),
$
where $y^\ast$ represents the preferred observation and $\tau^2$ encodes the
variance of this preference. This favors solutions that are closer to $y^*$ in the optimization process.


\begin{assumption} \label{ass:marginal_IG}
We express the GP information gain in terms of the marginal information gain
at the queried location:
\begin{align*}
\mathrm{KL}\!\left(q(f | y,x)\,\|\,q(f | x)\right)
\approx  \mathrm{KL}\!\left(q(f(x) | y,x)\,\|\,q(f(x) | x)\right) ,
\end{align*}
so that the total information gain is captured by the marginal
information gain in the evaluated function value $f(x)$.
\end{assumption}

 {
Assumption~\ref{ass:marginal_IG} expresses the KL in \eqref{eq:EFE_def_initial}
as a scalar quantity at the queried point $x$, rather than as a divergence over
the entire function $f$. Under the pointwise observation model, the information
provided by the observation enters the function through $f(x)$, and the
function-level information gain is exactly characterized by this marginal
quantity. 
}
Under this assumption, the EFE in \eqref{eq:EFE_def_initial} reduces to
\begin{equation}
\label{eq:EFE_def_academic_rephrase}
\begin{split}
G(x)
~=~
&\mathbb{E}_{q(y\given x)}\!\left[-\ln p(y)\right]
~-~
\\
&\mathbb{E}_{q(y\given x)}
\!\left[
\mathrm{KL}\!\big(
q(f(x)\given y,x)\,\big\|\,q(f(x)\given x)
\big)
\right].
\end{split}
\end{equation}
Using $p(y)=\mathcal{N}(y \given y^\ast,\tau^2)$, we have
\[
-\ln p(y)
~=~
\frac{(y-y^\ast)^2}{2\tau^2}
~+~
\frac12 \ln(2\pi \tau^2).
\]
Since $y\sim\mathcal{N}(\mu(x),\sigma_y^2(x))$, we obtain (Appendix~B):
\begin{equation}
\label{eq:pragmatic_term_rephrase}
\mathbb{E}_{q(y\given x)}[-\ln p(y)]
~=~
\frac{(\mu(x)-y^\ast)^2 + \sigma_y^2(x)}{2\tau^2} + \text{const.}
\end{equation}
Under a local GP update with Gaussian likelihood, the expected KL divergence
between posterior and prior at $x$ admits the closed form (Appendix~A)
\begin{equation}
\begin{split}
\label{eq:IG_rephrase}
\mathbb{E}_{q(y\given x)}
\!\left[
\mathrm{KL}\!\big(
q(f(x)\given y,x)\,\|\,q(f(x)\given x)
\big)
\right] \\
~=~
\frac12\ln\!\left(1+\frac{\sigma^2(x)}{\sigma_n^2}\right).
\end{split}
\end{equation}
Substituting \eqref{eq:pragmatic_term_rephrase} and
\eqref{eq:IG_rephrase} into \eqref{eq:EFE_def_academic_rephrase} and
dropping constants yields the compact expression
\begin{equation}
\label{eq:EFE_compact_rephrase}
G(x)
= \underbrace{
\frac{(\mu(x)-y^\ast)^2}{2\tau^2}
~+~
\frac{\sigma_y^2(x)}{2\tau^2} }_{\text{pragmatic value}}
~-~
\underbrace{
\frac12\ln\!\left(1+\frac{\sigma^2(x)}{\sigma_n^2}\right) 
}_{\text{epistemic value}},
\end{equation}
where $\sigma_y^2(x)=\sigma^2(x)+\sigma_n^2$.
The first two terms penalize deviations from preferred outcomes and large
    predictive uncertainty, while the final term rewards informative measurements
through the reduction of latent uncertainty. It can be seen that $\tau^{-2}$ acts as a term that balances exploration and exploitation. When $\tau^{-2}$ is high, the pragmatic value (preference terms) dominates $G(x)$ and contributes to aggressive exploitation. When $\tau^{-2}$ is low, the epistemic value dominates and EFE acts like a pure exploration strategy.  Therefore, our EFE formulation naturally gives us a way to balance exploration and exploitation through $\tau^{-2}$.
 {Although ~\eqref{eq:EFE_def_initial}--\eqref{eq:EFE_compact_rephrase} hold for any input dimension $d$,
the following proofs will be stated for $d=1$ for clarity.}

\subsection{Relations to known acquisitions} \label{sec:proofs_EIF2AF}

EFE is a generalization of the BO acquisition functions as it contains several known AFs. Below, we show the proof for this claim and show the specific restrictions on EFE that lead to LCB, UCB and EIG.

\begin{theorem}[LCB as EFE's linearized pragmatic value]
Consider
\begin{equation}
\label{eq:G_LCB_thm_short}
G(x)
=
\frac{(\mu(x)-y^\ast)^{2}}{2\tau^{2}}
+
\frac{\sigma^{2}(x)}{2\tau^{2}},
\quad
\sigma_y^{2}(x)=\sigma^{2}(x)+\sigma_n^{2},
\end{equation}
without epistemic value.
For a reference point $(\mu_0,\sigma_0)$ with $\sigma_0>0$, the first–order Taylor linearization around $(\mu_0,\sigma_0)$ yields a local linear acquisition
$G(x)\approx a\,\mu(x)+b\,\sigma(x)$ with coefficients $a,b$ determined at $(\mu_0,\sigma_0)$.
If $y^\ast\gg \mu_0$ (so $a<0$), then
\begin{equation}
\label{eq:LCB_final_short_concise}
\arg\min_x G(x)
\equiv
\arg\max_x\, \left[\, \mu(x)-\beta\,\sigma(x)\, \right] \, , 
\end{equation}
for $\beta = b/|a| > 0$, 
i.e., the LCB acquisition.
\end{theorem}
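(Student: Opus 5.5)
The plan is to treat $G$ as a smooth function $g(\mu,\sigma)=\frac{(\mu-y^\ast)^2}{2\tau^2}+\frac{\sigma^2}{2\tau^2}$ of the two posterior summaries. We linearize $g$ at the reference point $(\mu_0,\sigma_0)$ and then read off the sign structure of the coefficients. The $\sigma_n^2$ part of $\sigma_y^2$ is an additive constant, so it does not affect the argmin; this is why the statement may use $\sigma^2(x)$ in place of $\sigma_y^2(x)$.

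First, compute the partial derivatives
\[
a=\frac{\partial g}{\partial \mu}\Big|_{(\mu_0,\sigma_0)}=\frac{\mu_0-y^\ast}{\tau^2},\qquad
b=\frac{\partial g}{\partial \sigma}\Big|_{(\mu_0,\sigma_0)}=\frac{\sigma_0}{\tau^2}.
\]
The first-order Taylor expansion then gives
\[
G(x)\approx g(\mu_0,\sigma_0)+a(\mu(x)-\mu_0)+b(\sigma(x)-\sigma_0)=c+a\,\mu(x)+b\,\sigma(x),
\]
where $c$ does not depend on $x$. Dropping $c$ leaves the local linear acquisition $a\mu(x)+b\sigma(x)$ claimed in the statement.

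Second, use the hypotheses to fix the signs. Since $\sigma_0>0$ and $\tau^2>0$, we have $b>0$. The assumption $y^\ast\gg\mu_0$ gives $\mu_0-y^\ast<0$, so $a<0$ and $a=-|a|$. Then
\[
a\mu(x)+b\sigma(x)=-|a|\bigl(\mu(x)-\tfrac{b}{|a|}\sigma(x)\bigr).
\]
Minimizing a negative multiple of a function is the same as maximizing the function, and positive scaling preserves the optimizer set. Hence the argmin of the linearized $G$ equals $\arg\max_x[\mu(x)-\beta\sigma(x)]$ with $\beta=b/|a|=\sigma_0/(y^\ast-\mu_0)>0$. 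This is the LCB form.

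The only delicate point is interpretive rather than computational. The equivalence ``$\equiv$'' holds for the linear surrogate, not for $G$ itself, and it is valid only while $(\mu(x),\sigma(x))$ stays near $(\mu_0,\sigma_0)$. The omitted quadratic remainder is $\frac{(\mu-\mu_0)^2+(\sigma-\sigma_0)^2}{2\tau^2}$. I would remark that this remainder is small relative to the linear terms when $|y^\ast-\mu_0|$ dominates the variation of $\mu$, which is exactly what $y^\ast\gg\mu_0$ is meant to encode. So I would state the result as an exact statement about the first-order surrogate and note this regime. No earlier result is needed beyond the compact expression for $G$ in \eqref{eq:EFE_compact_rephrase} with the epistemic term removed.
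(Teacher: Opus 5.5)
Your proposal is correct and follows the paper's proof essentially step for step. You compute the same coefficients $a=(\mu_0-y^\ast)/\tau^2$ and $b=\sigma_0/\tau^2$, use the same sign argument, and rescale to $\mu(x)-\beta\sigma(x)$ with $\beta=b/|a|$. Your remark on the quadratic remainder goes beyond the paper, which does not discuss it. Note that $y^\ast\gg\mu_0$ only makes the $\mu$-part of that remainder small. The $\sigma$-part, $(\sigma-\sigma_0)^2/(2\tau^2)$, is negligible relative to the $b\sigma$ term only if $\sigma(x)$ stays close to $\sigma_0$.
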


\begin{proof}
Define
$
J(\mu,\sigma)
=
(\mu-y^\ast)^{2} / (2\tau^{2})
+
(\sigma^{2}+\sigma_n^{2})/ (2\tau^{2}).
$
A first-order Taylor expansion at $(\mu_0,\sigma_0)$, after dropping constants, gives:
\[
J(\mu,\sigma)\approx a \mu+b\sigma, \ \
a=\frac{\partial J}{\partial\mu}\Big|_{(\mu_0,\sigma_0)}, \ \
b=\frac{\partial J}{\partial\sigma}\Big|_{(\mu_0,\sigma_0)}.
\]
The partial derivatives are
\[
\frac{\partial J}{\partial\mu}
=
\frac{\mu-y^\ast}{\tau^{2}},
\qquad
\frac{\partial J}{\partial\sigma}
=
\frac{\sigma}{\tau^{2}}.
\]
Evaluating at $(\mu_0,\sigma_0)$ gives
$a=(\mu_0-y^\ast)/\tau^{2}$ and
$b=\sigma_0/\tau^{2}$, i.e., $b>0$.
Hence, the first–order approximation about $(\mu_0,\sigma_0)$ yields
$
G(x)\approx a\,\mu(x)+b\,\sigma(x).
$
If $y^\ast\gg\mu_0$, then $a<0$, and minimizing $a\,\mu(x)+b\,\sigma(x)$ is equivalent to maximizing
\[
\mu(x)-\beta\,\sigma(x),
\qquad
\beta=\frac{b}{|a|}>0,
\]
which establishes \eqref{eq:LCB_final_short_concise}. Therefore, LCB is EFE's linearized pragmatic term. 
\end{proof}

\begin{theorem}[UCB as a Local Linearization of EFE]
Let $J(\mu,\sigma)$ denote the full EFE from \eqref{eq:EFE_compact_rephrase} written in $(\mu,\sigma)$ with $\sigma_y^2=\sigma^2+\sigma_n^2$. 
For a reference $(\mu_0,\sigma_0)$ with $\sigma_0>0$, a first-order Taylor linearization yields a local acquisition of the form $J(\mu,\sigma)\approx a\,\mu+b\,\sigma$. 
Under the regime $\mu_0\ll y^\ast$ and $\sigma_n^2+\sigma_0^2\ll\tau^2$, minimizing $J$ is equivalent to maximizing
\[
\mu(x)+\beta\,\sigma(x),
\qquad
\beta=\frac{-b}{-a}
=\frac{\displaystyle \frac{1}{\sigma_n^2+\sigma_0^2}-\frac{1}{\tau^2}}{\displaystyle \frac{y^\ast-\mu_0}{\tau^2}}
>0,
\]
i.e., a local UCB acquisition.
\end{theorem}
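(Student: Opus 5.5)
The plan is to follow the template of the LCB theorem, now keeping the epistemic term. I will write the full EFE from \eqref{eq:EFE_compact_rephrase} as a function of $(\mu,\sigma)$:
\[
J(\mu,\sigma)=\frac{(\mu-y^\ast)^2}{2\tau^2}+\frac{\sigma^2+\sigma_n^2}{2\tau^2}-\frac12\ln\!\left(1+\frac{\sigma^2}{\sigma_n^2}\right).
\]
Next I will take the first-order Taylor expansion at $(\mu_0,\sigma_0)$ and drop the constant terms. This gives $J\approx a\mu+b\sigma$ with $a=\partial_\mu J|_{(\mu_0,\sigma_0)}$ and $b=\partial_\sigma J|_{(\mu_0,\sigma_0)}$.

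The partial derivatives are routine. The $\mu$-derivative is unchanged from the LCB case:
\[
a=\frac{\mu_0-y^\ast}{\tau^2}.
\]
For the $\sigma$-derivative, the pragmatic part contributes $\sigma/\tau^2$. Writing the log term as $\frac12\ln(\sigma_n^2+\sigma^2)-\frac12\ln\sigma_n^2$, the epistemic part contributes $-\sigma/(\sigma_n^2+\sigma^2)$. Hence
\[
b=\sigma_0\left(\frac{1}{\tau^2}-\frac{1}{\sigma_n^2+\sigma_0^2}\right).
\]

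The sign analysis goes as follows. The assumption $\mu_0\ll y^\ast$ gives $a<0$. The assumption $\sigma_n^2+\sigma_0^2\ll\tau^2$ (in fact $<$ suffices) gives $1/(\sigma_n^2+\sigma_0^2)>1/\tau^2$, so $b<0$. Both coefficients are therefore negative. Minimizing $a\mu+b\sigma$ is then the same as maximizing $(-a)\mu+(-b)\sigma$. Dividing by $-a>0$ preserves the argmax, so this is equivalent to maximizing $\mu(x)+\beta\sigma(x)$ with $\beta=(-b)/(-a)>0$. This is the local UCB form, and it mirrors the final step of the LCB theorem.

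The main point of care is the expression for $\beta$. The direct computation gives
\[
\beta=\frac{\sigma_0\left(\frac{1}{\sigma_n^2+\sigma_0^2}-\frac{1}{\tau^2}\right)}{\frac{y^\ast-\mu_0}{\tau^2}},
\]
which carries a factor $\sigma_0$ that the stated formula omits. I would handle this in one of two ways. The first option is to note that $\sigma_0>0$ is a fixed positive scalar at the reference point, so it does not affect positivity of $\beta$ or the UCB structure; it only rescales the trade-off constant. The second option is to parametrize the linearization in $\sigma^2$ rather than $\sigma$, which removes the $\sigma_0$ factor; in that case one must check that this parametrization is what the statement intends. Either way, the key conclusion (the UCB form with $\beta>0$) depends only on the signs of $a$ and $b$. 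I would state this explicitly and note that the exact constant is determined up to the positive factor $\sigma_0$.
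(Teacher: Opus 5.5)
Your proof matches the paper's argument step for step: the same $J$, the same partials $a=(\mu_0-y^\ast)/\tau^2$ and $b=\sigma_0\bigl(1/\tau^2-1/(\sigma_n^2+\sigma_0^2)\bigr)$, and the same sign argument ($a<0$, $b<0$) leading to maximizing $\mu(x)+\beta\sigma(x)$ with $\beta=(-b)/(-a)>0$. You are right about the constant. The paper computes this same $b$ and then writes $-b/-a$ without the factor $\sigma_0$, so the stated $\beta$ is off by $\sigma_0$, and your first resolution is the correct fix: it is a positive rescaling that leaves $\beta>0$ and the UCB structure intact. Linearizing in $\sigma^2$ would not recover the stated formula either, because it introduces a factor $\tfrac12$ and gives an acquisition $\mu+\beta'\sigma^2$ that is no longer UCB in $\sigma$.
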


\begin{proof}
By definition, $J(\mu,\sigma)$ is the full EFE from \eqref{eq:EFE_compact_rephrase} expressed in $(\mu,\sigma)$ with $\sigma_y^2=\sigma^2+\sigma_n^2$. 
A first-order Taylor expansion at $(\mu_0,\sigma_0)$ gives
\[
J(\mu,\sigma)\approx a\mu + b\sigma, \ \
a=\frac{\partial J}{\partial\mu}\Big|_{(\mu_0,\sigma_0)}, \ \
b=\frac{\partial J}{\partial\sigma}\Big|_{(\mu_0,\sigma_0)}.
\]
Differentiating \eqref{eq:EFE_compact_rephrase} and using $\sigma_y^2=\sigma^2+\sigma_n^2$ yields
\[
\frac{\partial J}{\partial\mu}=\frac{\mu-y^\ast}{\tau^2},
\qquad
\frac{\partial J}{\partial\sigma}=\frac{\sigma}{\tau^2}-\frac{\sigma}{\sigma_n^2+\sigma^2} \, .
\]
Hence, we have
\[
a=\frac{\mu_0-y^\ast}{\tau^2},
\qquad
b=\sigma_0\!\left(\frac{1}{\tau^2}-\frac{1}{\sigma_n^2+\sigma_0^2}\right).
\]
Minimizing $J$ is equivalent to maximizing $-J$, i.e., $-J(\mu,\sigma)\approx (-a)\,\mu+(-b)\,\sigma$. 
Under the preference regime $\mu_0\ll y^\ast$ and $\sigma_n^2+\sigma_0^2\ll\tau^2$, we have $a<0$ and $b<0$, so
\[
\arg\min_x J(\mu(x),\sigma(x))
~\equiv~
\arg\max_x\big[\mu(x)+\beta\,\sigma(x)\big],
\]
where 
$$\beta=\frac{-b}{-a}
=\frac{\displaystyle \frac{1}{\sigma_n^2+\sigma_0^2}-\frac{1}{\tau^2}}{\displaystyle \frac{y^\ast-\mu_0}{\tau^2}}
>0,$$
which is a local UCB acquisition. Therefore, UCB is a linear surrogate of full EFE.
\end{proof}

\begin{theorem}[EFE's Epistemic Term Equals EIG]
With $G(x)$ defined from \eqref{eq:EFE_compact_rephrase} after dropping the pragmatic value, and using \eqref{eq:IG_rephrase}, we have
\begin{equation}
\begin{split}
\label{eq:IG_ES_1_short_theorem}
G(x)
&=
\mathbb E_{q(y\given x)}
\!\left[
\mathrm{KL}\!\Big(q(f(x)\given y,x)\,\big\|\,q(f(x)\given x)\Big)
\right]\\
&=
\frac{1}{2}\ln\!\Big(1+\frac{\sigma^2(x)}{\sigma_n^2}\Big),
\end{split}
\end{equation}
which is the same as the EIG objective $I\big(f(x);y\given x\big).$
\end{theorem}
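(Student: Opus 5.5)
The plan is to show that the expected KL in \eqref{eq:IG_ES_1_short_theorem} coincides with the mutual information $I(f(x);y\given x)$, and then evaluate that mutual information in closed form for the Gaussian model. Dropping the pragmatic value from \eqref{eq:EFE_compact_rephrase} leaves only the epistemic term, and \eqref{eq:IG_rephrase} (Appendix~A) gives its value $\tfrac12\ln(1+\sigma^2(x)/\sigma_n^2)$. So the second equality is inherited directly. The remaining content is the identification with EIG, which I will establish independently of Appendix~A as a consistency check.

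First, write the expected KL as an integral. Use $q(y\given x)=\int q(y\given f(x))\,q(f(x)\given x)\,df(x)$, where $q(y\given f(x))=\mathcal N(y\given f(x),\sigma_n^2)$, and note that $q(f(x)\given y,x)\,q(y\given x)=q(f(x),y\given x)$. This gives
\[
\mathbb E_{q(y\given x)}\!\left[\mathrm{KL}\!\big(q(f(x)\given y,x)\,\|\,q(f(x)\given x)\big)\right]
=\iint q(f(x),y\given x)\ln\frac{q(f(x),y\given x)}{q(f(x)\given x)\,q(y\given x)}\,df(x)\,dy .
\]
The right-hand side is by definition $I(f(x);y\given x)$, which is the EIG acquisition. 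This step is purely measure-theoretic, using Bayes' rule and Fubini, and needs only that the joint density exists.

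Second, compute the mutual information using entropies, $I=H(y\given x)-H(y\given f(x),x)$. The predictive distribution is Gaussian with variance $\sigma_y^2(x)=\sigma^2(x)+\sigma_n^2$. The conditional distribution of $y$ given $f(x)$ is Gaussian with variance $\sigma_n^2$. Using the Gaussian entropy $\tfrac12\ln(2\pi e s^2)$, the difference is $\tfrac12\ln\big((\sigma^2(x)+\sigma_n^2)/\sigma_n^2\big)$, which matches \eqref{eq:IG_ES_1_short_theorem}. As an alternative route, one can average the closed-form Gaussian KL. The posterior variance $\sigma^2\sigma_n^2/(\sigma^2+\sigma_n^2)$ does not depend on $y$, and the expected squared mean shift equals $\sigma^4/(\sigma^2+\sigma_n^2)$. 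Together these give the same expression.

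The only delicate point is interpretive rather than computational. The theorem says EFE's epistemic term equals the \emph{marginal} EIG at $x$, not the function-level information gain $I(f;y\given x)$. That identification rests on Assumption~\ref{ass:marginal_IG}, which I will invoke explicitly. For a pointwise observation, the chain rule $I(f;y)=I(f(x);y)+I(f;y\given f(x))$ has a vanishing second term, because $y$ is conditionally independent of $f$ given $f(x)$. So the assumption actually holds with equality here. I expect this step to be the main thing to state carefully. The rest is routine Gaussian algebra.
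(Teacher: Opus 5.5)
Your proof is correct, but it is organized differently from the paper's. The paper never establishes that the expected KL equals the mutual information as an identity. It computes $I(f(x);y\given x)$ on the latent side, as $H(f(x)\given x)-H(f(x)\given y,x)$, using the prior variance $\sigma^2(x)$ and the $y$-independent posterior variance $\sigma^2\sigma_n^2/(\sigma^2+\sigma_n^2)$. It then observes that this number coincides with the closed form for the expected KL, which was obtained separately by the Kalman-update computation of Appendix~A (your ``alternative route'' is exactly that computation). In the paper, the equality therefore rests on two Gaussian calculations happening to agree. Your first step instead shows, via $q(f(x)\given y,x)\,q(y\given x)=q(f(x),y\given x)$, that the expected posterior-to-prior KL is the mutual information for any joint model, so the Gaussian algebra is needed only once. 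Your second step uses the observation-side decomposition $H(y\given x)-H(y\given f(x),x)$, which needs only $\sigma_y^2(x)$ and $\sigma_n^2$ and no posterior formula at all. The paper's latent-side form buys a direct reading as ``entropy reduction in $f(x)$'', which matches the epistemic interpretation; yours is more general and more economical. Your closing chain-rule argument, $I(f;y)=I(f(x);y)+I(f;y\given f(x))$ with the second term zero by conditional independence, is also correct. It goes beyond the paper, which only asserts in prose that Assumption~\ref{ass:marginal_IG} is exact. It is not needed for the theorem as literally stated, though, since both sides there are already marginal in $f(x)$. One small point you share with the statement itself: dropping the pragmatic value from the compact EFE leaves $-\tfrac12\ln(1+\sigma^2(x)/\sigma_n^2)$. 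So ``$G(x)$ equals EIG'' should be read as ``minimizing $G$ is maximizing EIG''.
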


\begin{proof}
For the scalar Gaussian pair at fixed $x$,
\[
I\big(f(x);y\given x\big)
=
H\big(f(x)\given x\big)
-
H\big(f(x)\given y,x\big),
\]
with Gaussian entropies
\[
H\big(f(x)\given x\big)=\frac12\ln(2\pi e\,\sigma^2(x)), 
\]
\[
H\big(f(x) | y,x\big) \! = \! \frac12\ln\!\Big(2\pi e\,\frac{\sigma^2(x)\sigma_n^2}{\sigma^2(x)+\sigma_n^2}\Big).
\]
Substitution yields
\begin{align*}
I\big(f(x);y\given x\big)
&=
\frac12\ln\!\Big(1+\frac{\sigma^2(x)}{\sigma_n^2}\Big) \, ,
\end{align*}
which coincides with \eqref{eq:IG_ES_1_short_theorem}. 
\end{proof}

 {Under a simple linearization, EFE reduces to UCB, LCB and EIG, exhibiting itself as a general AF that contains the others. These reductions are meaningful ablations: i) LCB is the pragmatic-only limit of EFE, ii) EIG the epistemic-only limit, and iii) UCB the full linearized EFE. LCB and UCB are thus two operating regimes of the same objective balanced by the parameter $\tau^2$. }  


\section{EFE AS AN ACQUISITION FUNCTION}
In this section, we provide the sufficient condition for which EFE converges in locally strictly concave functions, and we propose a curvature-aware update rule.
 
\subsection{Convergence analysis for EFE-based AF} \label{sec:EFE_convergence}

\begin{theorem}[Sufficient Condition for Unbiased Local Convergence of EFE] \label{thm:EFEconvergence}
Let $f$ have a unique maximizer $x^\star$ and admit
$$f(x)=f(x^\star)-\frac{m}{2}(x-x^\star)^2+O(|x-x^\star|^3)$$ with $m>0$.
Consider the maximization form of EFE:
\[
a_{\mathrm{EFE}}(x) \! = \!
-\Big[
\frac{(\mu(x) \! - \! y^\star)^2}{2\tau^2}
+\frac{\sigma^2(x) \!+ \! \sigma_n^2}{2\tau^2}
-\frac{1}{2}\log(1 +  \frac{\sigma^2(x)}{\sigma_n^2})
\Big]
\]
with $y^\star=\mu(x^\star)$. Then $\tau^2=\sigma^2(x^\star)+\sigma_n^2$ is sufficient for unbiased local convergence:
in a sufficiently small neighborhood of $x^\star$, the maximizer of $a_{\mathrm{EFE}}$ equals $x^\star$.
\end{theorem}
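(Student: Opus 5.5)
The plan is to avoid differentiating $a_{\mathrm{EFE}}$ and instead split it into two pieces, each maximized at $x^\star$ under the stated choice of $\tau^2$. Write
\[
a_{\mathrm{EFE}}(x) = -\frac{(\mu(x)-y^\star)^2}{2\tau^2} + h\big(\sigma^2(x)\big),
\qquad
h(s) := -\frac{s+\sigma_n^2}{2\tau^2} + \frac12\log\!\Big(1+\frac{s}{\sigma_n^2}\Big),
\]
so the mean enters only through the first piece and the variance only through the scalar function $h$ on $s\ge 0$.

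\textbf{Step 1: the variance piece.} Analyze $h$ as a function of one real variable. We have $h'(s) = -\tfrac{1}{2\tau^2} + \tfrac{1}{2(\sigma_n^2+s)}$ and $h''(s) = -\tfrac{1}{2(\sigma_n^2+s)^2}<0$, so $h$ is strictly concave. Its unique maximizer is $s = \tau^2-\sigma_n^2$. With $\tau^2 = \sigma^2(x^\star)+\sigma_n^2$, this maximizer is exactly $s=\sigma^2(x^\star)$. Hence $h(\sigma^2(x)) \le h(\sigma^2(x^\star))$ for every $x$, with no assumption on how $\sigma^2$ varies near $x^\star$. This is the sense in which the stationary point is agnostic to the shape of the posterior variance.

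\textbf{Step 2: the mean piece.} Since $y^\star=\mu(x^\star)$, the first piece is $\le 0$ and vanishes at $x^\star$. Summing the two bounds gives $a_{\mathrm{EFE}}(x)\le a_{\mathrm{EFE}}(x^\star)$ on the neighborhood, so $x^\star$ is a maximizer.

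\textbf{Step 3: uniqueness (main obstacle).} To conclude that the maximizer \emph{equals} $x^\star$, I need $\mu(x)\neq y^\star$ for $x\neq x^\star$ in a small neighborhood. Then the first piece is strictly negative there, and the inequality is strict. For this I must link the posterior mean to $f$ locally. I would invoke the local quadratic form of $f$ and assume, as the paper's setting implicitly does, that $\mu$ inherits it near $x^\star$:
\[
\mu(x)=\mu(x^\star)-\tfrac{\tilde m}{2}(x-x^\star)^2+O(|x-x^\star|^3), \qquad \tilde m>0 .
\]
This holds, for example, when the GP posterior is locally consistent with $f$ in $C^2$. It gives $(\mu(x)-y^\star)^2 = \tfrac{\tilde m^2}{4}(x-x^\star)^4 + O(|x-x^\star|^5)$, which is strictly positive for $0<|x-x^\star|<\delta$ with $\delta$ small enough.

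Making this link between $\mu$ and $f$ precise is the one non-routine step, and I would state it explicitly as the local modeling hypothesis. The rest of the argument is the exact, sign-based decomposition above. Finally, note that any $\tau^2\neq\sigma^2(x^\star)+\sigma_n^2$ makes $h'(\sigma^2(x^\star))\neq 0$. The variance term then generically shifts the maximizer by an amount depending on $(\sigma^2)'(x^\star)$, which explains why this choice of $\tau^2$ gives \emph{unbiased} convergence.
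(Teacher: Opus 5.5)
Your argument is correct, and it reaches the result by a different route from the paper. The paper Taylor-expands $\mu$ and $\sigma^2$ to second order in $x-x^\star$, making exactly your local hypothesis on $\mu$ (with $\tilde m=m$). It truncates $a_{\mathrm{EFE}}$ to a quadratic model $C+\tilde L(x-x^\star)+\tilde Q(x-x^\star)^2$ with $\tilde L=-\tfrac{g}{2}\Delta$ and $\tilde Q=-\tfrac{v_2}{4}\Delta-\tfrac{g^2}{4S^2}$, where $g=(\sigma^2)'(x^\star)$, $v_2=(\sigma^2)''(x^\star)$, $S=\sigma^2(x^\star)+\sigma_n^2$ and $\Delta=1/\tau^2-1/S$. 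It then solves for the stationary point $-\tilde L/(2\tilde Q)$, which vanishes when $\Delta=0$. Maximality comes from $\tilde Q|_{\Delta=0}=-g^2/(4S^2)$, so it is strict only when $g\neq 0$. That coefficient is just the second-order trace of your Step 1, since $\tfrac12 h''(\sigma^2(x^\star))\,g^2=-g^2/(4S^2)$ for your variance function $h$.

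Your exact, sign-based decomposition buys several things. It needs no truncation of higher-order terms and no differentiability of $\sigma^2$. It gives the stronger conclusion that $x^\star$ is a \emph{global} maximizer, because Steps 1 and 2 hold for every $x$, not only near $x^\star$. It also handles the degenerate case $g=0$ cleanly. There the paper's quadratic model is identically constant, its stationary-point formula becomes $0/0$, and strictness must come from quartic terms the paper discards, such as the mean contribution $-\tfrac{m^2}{8\tau^2}(x-x^\star)^4$.

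You can also relax your Step 3. Strict concavity of $h$ already gives a strict inequality wherever $\sigma^2(x)\neq\sigma^2(x^\star)$. So uniqueness near $x^\star$ holds if either $\tilde m>0$ or $g\neq 0$, and in the latter case no link between $\mu$ and $f$ is needed.

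What the paper's route buys in return is quantitative information away from the unbiased setting. For a mis-set $\tau^2$ it gives the offset $-\tilde L/(2\tilde Q)$ explicitly, with numerator proportional to $g\Delta$ and no dependence on $m$. That curvature-agnosticism observation is what motivates the adaptive $\tau^2$ rule. Your closing remark only says qualitatively that a mismatched $\tau^2$ generically shifts the maximizer.
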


\begin{proof}
Let $h:=x-x^\star$ and assume second--order expansions
\[
\mu(x)=\mu(x^\star)-\frac{m}{2}h^2+O(h^3),
\]
\[
\sigma^2(x)=v_0+g\,h+\frac12 v_2 h^2+O(h^3),
\]
with $v_0=\sigma^2(x^\star)>0$, $g=\sigma^{2\,\prime}(x^\star)$, $v_2=\sigma^{2\,\prime\prime}(x^\star)$.
Set $S:=v_0+\sigma_n^2$ and $\Delta:= 1/(\tau^2) - 1/S$.

The first two terms of $a_{\mathrm{EFE}}(x)$ are evaluated as:
\begin{align*}
\frac{(\mu(x)-y^\star)^2}{2\tau^2} &= \frac{m^2}{8\tau^2}h^4+O(h^5) , \\
\frac{\sigma^2(x)+\sigma_n^2}{2\tau^2} &=\frac{S}{2\tau^2}+\frac{g}{2\tau^2}h+\frac{v_2}{4\tau^2}h^2+O(h^3) .
\end{align*}
For the log term, write
\[1+\frac{\sigma^2(x)}{\sigma_n^2}=\frac{S}{\sigma_n^2}\big(1+\frac{g}{S}h+\frac{v_2}{2S}h^2+O(h^3)\big)\] and use
$\log(1+u)=u - u^2/2+O(u^3)$ to obtain
\[
-\frac12\log\big(1+\frac{\sigma^2(x)}{\sigma_n^2}\big)
=C_0-\frac{g}{2S}h-\Big(\frac{v_2}{4S}-\frac{g^2}{4S^2}\Big)h^2+O(h^3),
\]
with constant $C_0$ independent of $h$. Within the neighborhood of $x^*$,  higher order terms of $h$ are comparatively small and can be neglected. Collecting terms up to $O(h^2)$ gives the quadratic model:
\begin{equation}\label{eq:quad}
a_{\mathrm{EFE}}(h)=C+\tilde L\,h+\tilde Q\,h^2,\ 
\end{equation}
where 
\begin{equation*}
\tilde L= -\frac{g}{2}\Delta \, , \qquad 
\tilde Q= -\frac{v_2}{4}\Delta-\frac{g^2}{4S^2} \, .
\end{equation*}
Define $x_{\mathrm{EFE}}:= \arg\max_{x} a_{\mathrm{EFE}}(x)$ (locally) and
$h_{\mathrm{EFE}}:=x_{\mathrm{EFE}}-x^\star$. Next, we evaluate the stationary point $h_{\mathrm{EFE}}$ of $a_{\mathrm{EFE}}$ by equating its gradient to 0, \[a'_{\mathrm{EFE}}(h)=\tilde L+2\tilde Q h =0,\] 
and solving for $h$, which gives
\begin{equation} \label{eqn:EFE_contraction}
h_{\mathrm{EFE}} = x_{\mathrm{EFE}}-x^\star =-\frac{\tilde L}{2\tilde Q}
=\frac{g\left(\frac{1}{\tau^2}-\frac{1}{S}\right)}
       {v_2\left(\frac{1}{\tau^2}-\frac{1}{S}\right)+\frac{g^2}{S^2}}.
\end{equation}
{Unbiasedness} requires $h_{\mathrm{EFE}}=0$. From (\ref{eqn:EFE_contraction}), this holds if
\[
\Delta=0
\quad\Longleftrightarrow\quad
\frac{1}{\tau^2}=\frac{1}{S}
\quad\Longleftrightarrow\quad
\tau^2=S=\sigma^2(x^\star)+\sigma_n^2.
\]
Local maximality at $h_{\mathrm{EFE}}$ requires $a_{\mathrm{EFE}}''(h_{\mathrm{EFE}})<0$.
Since \eqref{eq:quad} is quadratic, its second derivative is
\[
a_{\mathrm{EFE}}''(h)=2\tilde Q
=-\frac{v_2}{2}\Delta-\frac{g^2}{2S^2} \, .
\]
Hence, at the unbiased setting $\Delta=0$,
\[
a_{\mathrm{EFE}}''(h_{\mathrm{EFE}})=2\tilde Q\big|_{\Delta=0}
=-\frac{g^2}{2S^2}\le 0,
\]
which is strictly negative whenever $g\neq 0$.
Therefore, within the quadratic model, the condition
\[
{\ \tau^2=S=\sigma^2(x^\star)+\sigma_n^2\ }
\]
is \emph{sufficient} for \emph{unbiased local convergence}, i.e., $h_{\mathrm{EFE}}=0$, with a local maximum at $x^\star$. This is strict when $g\neq 0$.
\end{proof}

Two key observations can be made from (\ref{eqn:EFE_contraction}): i) the stationary point of EFE ($h_{\mathrm{EFE}}$) is independent of the curvature $m$ of the underlying function , and ii) the correct choice of $\tau^2$ for EFE to jump into the optimum has an unknown term $\sigma^2(x^\star)$ in it. This makes the EFE's stationary point curvature agnostic. The next section takes inspiration from these observations to define a novel curvature-aware update rule for $\tau^2$.

\subsection{Curvature-aware EFE via adaptive $\tau^2$} \label{sec:adaptiveEFE}
Resampling around high curvature regions provides two advantages, one for optimization and the other for learning: i) discovering and refining a potential optimal solution and ii) a better GP learning around all local optima.  {By Taylor's theorem, the magnitude of $\nabla^2\mu$ governs the leading second-order term of the GP mean around any stationary point. So, refining samples in high-$|\nabla^2\mu|$ regions both improves the local GP fit and confirms candidate extrema.} 
However, exploring low curvature regions with  high uncertainty is also vital. 
An efficient strategy for joint optimization and learning of multi-modal functions would be to balance between exploration and exploitation by i) jumping to the local optima of high curvature regions with low uncertainty and ii) exploring low curvature regions with high uncertainty. Taking inspiration from Theorem \ref{thm:EFEconvergence}, we define a novel update rule for $\tau^2$ to balance exploration and exploitation as:
\begin{equation}
\label{eq:tau_adapt}
    \tau_i^{-2} =   {|\nabla^2\mu( x)|}+ \frac{1}{\sigma^{2}( x)},
\end{equation}
 {where $\nabla^2\mu( x) = \mathrm{tr}(H_\mu( x)) = \sum_{j=1}^d \partial^2\mu/\partial x_j^2$ is the Laplacian of the GP posterior mean. For an ARD-RBF GP, $\nabla^2\mu( x)$ admits a closed form (Appendix~C), so the rule is cheap to evaluate at every candidate $ x$.} The rule is followed by a normalization and rescaling of $\tau^2_i$ to an easily tunable constant range $[\tau^2_{\min},\tau^2_{\max}]$ that represents the pure exploitation and pure exploration strategy:
\begin{equation}
    \tau^2 = \tau_{\min}^2 + (\tau_{\max}^2-\tau_{\min}^2)  \frac{\tau^2_i}{\max(\tau^2_i)}.
\end{equation}
This update rule automatically shifts the EFE in (\ref{eq:EFE_compact_rephrase}) between exploration and exploitation by responding to the local curvature and uncertainty of the GP. When the posterior mean is sharply curved and the predictive variance is small, the update makes $\tau^2$ small, pushing EFE toward exploitation and pulling the next query closer to maxima. Conversely, in regions where the model is flat and uncertain, $\tau^2$ becomes large, encouraging exploration by allowing the acquisition to favor points where the potential information gain is higher. In this way, the update law continuously balances the two behaviors, exploiting structure where the model is confident and exploring where it lacks information. 
 {Extrema located in low-curvature regions are not excluded. Once the surrounding posterior variance is small, the pragmatic term dominates and pulls the next query toward the GP-mean optimum regardless of curvature.}
 
\begin{figure*}[!t]
    \centering

    \begin{subfigure}[t]{0.6\textwidth}
        \centering
        \includegraphics[width=\linewidth]{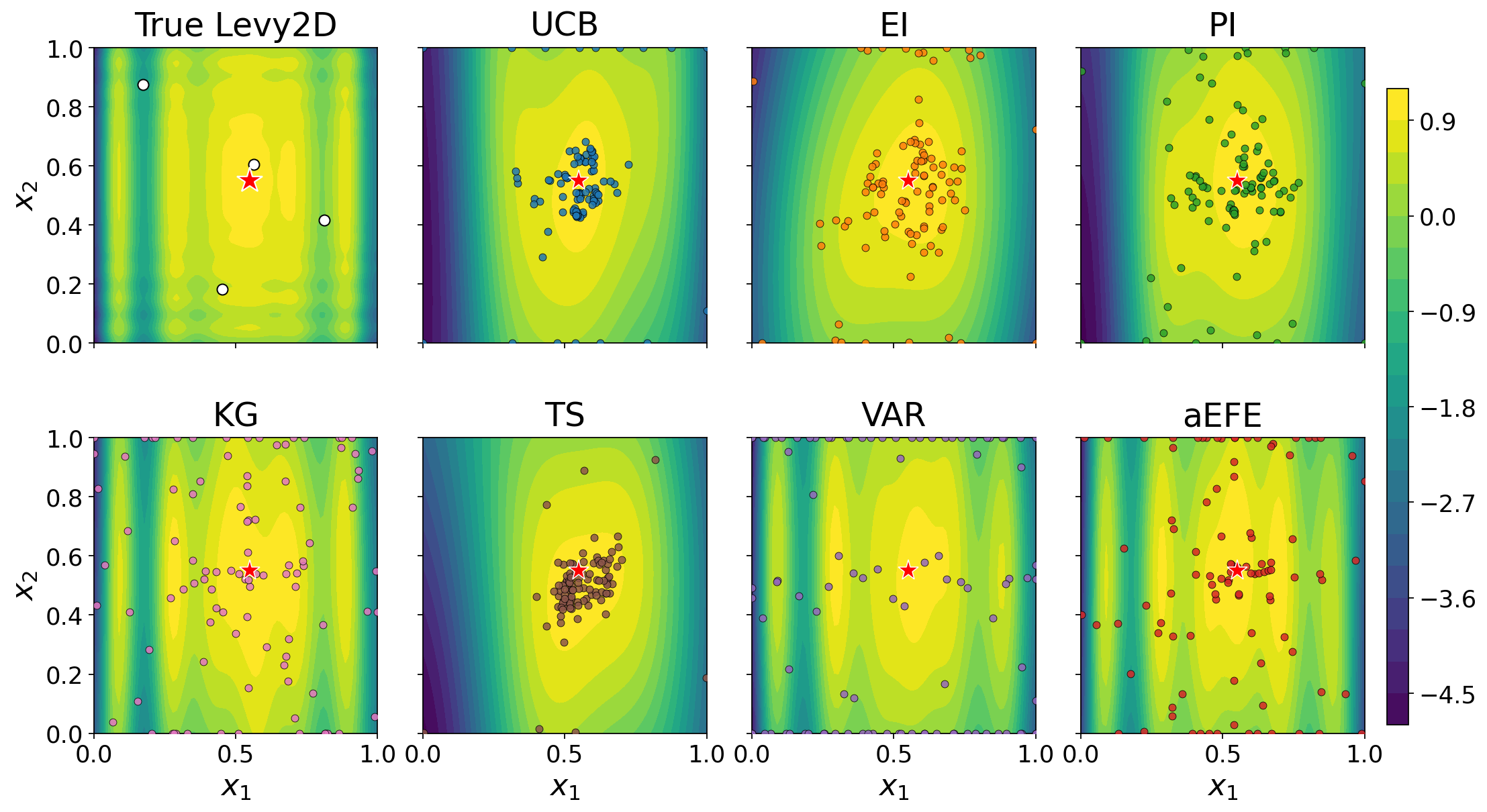}
        \caption{}
        \label{fig:Levy_gp_grid}
    \end{subfigure}
    \hfill
    \begin{subfigure}[t]{0.32\textwidth}
        \centering
        \includegraphics[width=\linewidth]{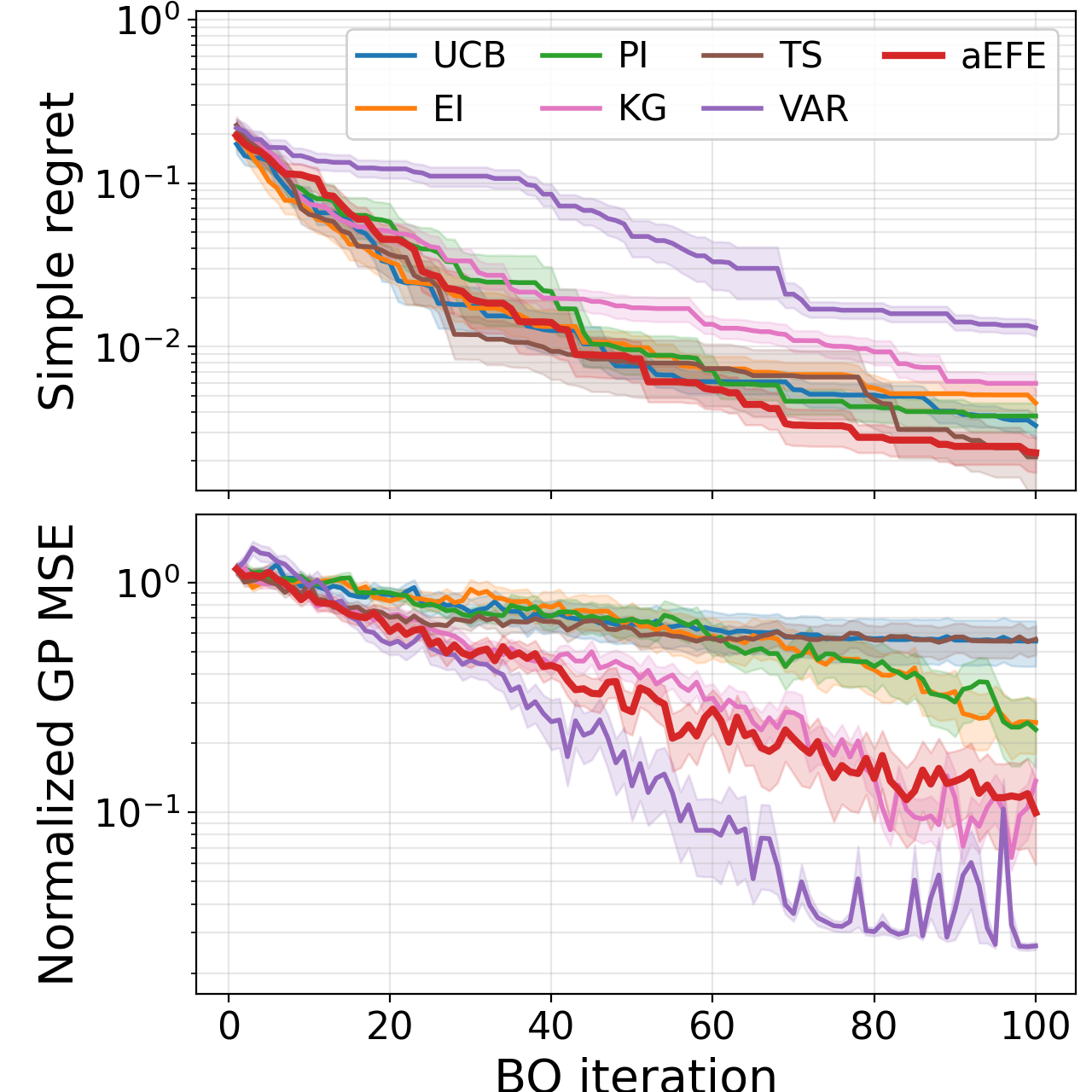}
        \caption{}
        \label{fig:Levy_curves}
    \end{subfigure}

    \caption{ {Results on the 2-D Levy benchmark.
    \textbf{(a)} Standardized negative 2-D Levy and final GP posterior means of each method for a representative seed over the normalized domain $[0,1]^2$. Scatter markers show BO query locations after the Sobol initial design. The red star marks the global maximum.
    \textbf{(b)} Simple regret and normalized GP MSE versus BO iteration. Curves are means over 25 seeds; shaded bands are $\pm 1$ SEM.}}
    \label{fig:Levy_combined}
\end{figure*}

\section{SIMULATION EXPERIMENTS} \label{sec:experiments}
 {This section demonstrates adaptive EFE on a joint learning and optimization problem, first as a proof of concept on a Van der Pol system identification task and then on a two-dimensional Levy benchmark against six other AFs.}

\subsection{Adaptive and non-adaptive EFE}
This section shows the advantage of our adaptive EFE over non-adaptive EFE using a system identification problem on a Van der Pol oscillator given by:
$\ddot{x}-\kappa(1-x^{2})\dot{x}+x=0,$
where $\kappa$ must be inferred. The reference trajectory is generated with the true value $\kappa=3$ from the initial condition $x(0)=0.5$, $\dot{x}(0)=0$ with noise $\sigma_n=0.1$ and sampling time $\Delta t = 0.05s$. the objective is evaluated only on the steady-state window $t\in[20,60]$ to avoid transient artefacts. The BO objective is the negative MSE between the simulated trajectory at $\kappa$ and the noisy reference.  {Both variants share identical initial points, seeds, 50-iteration budgets, GP settings and observation noise; only $\tau^2$ differs (adaptive vs.\ fixed).} Figure~\ref{fig:adaptive_EFE} shows the BO results overlaid on the true objective. Although both variants find the optimum at $\kappa=3$, adaptive EFE explores all high-curvature regions and thus learns the cost function more accurately, whereas non-adaptive EFE skips low-valued high-curvature regions.

\begin{figure}[!htb]
    \centering
    \includegraphics[width=.7\linewidth]{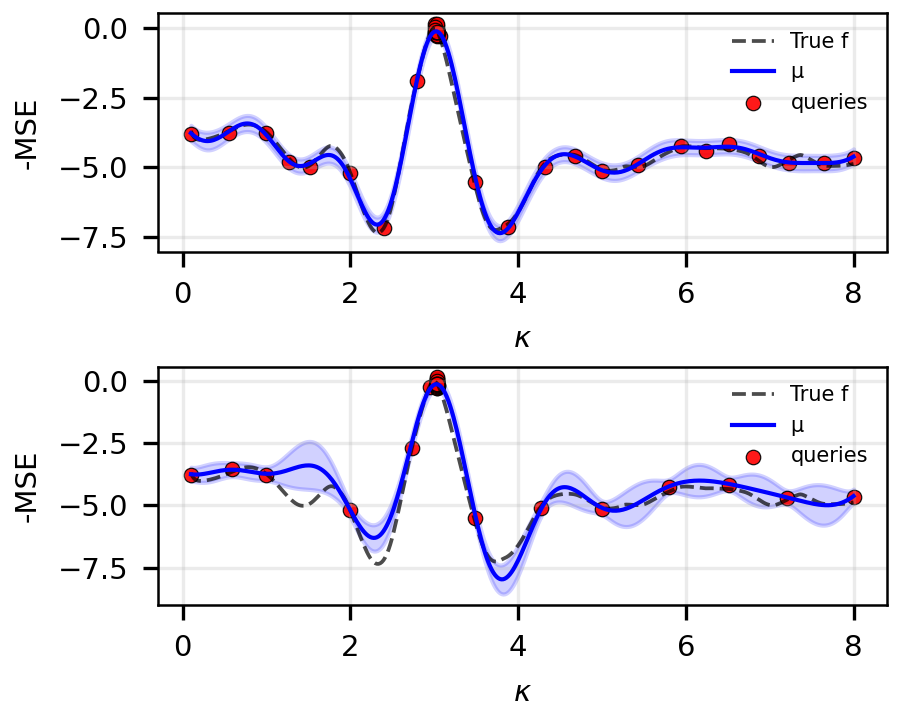}
    \caption{ {BO trajectories of adaptive (top) and non-adaptive (bottom) EFE on the Van der Pol parameter-estimation objective. Markers indicate sampled $\kappa$ values; the solid curve is the true negative-MSE objective.}}
    \label{fig:adaptive_EFE}
\end{figure}

\subsection{Learning and Optimization }

 {We benchmark adaptive EFE (aEFE) against six widely-used AFs on the 2D Levy benchmark: UCB with a time-varying schedule~\cite{srinivas2012information}, Expected Improvement (EI)~\cite{frazier2018tutorial}, Probability of Improvement (PI)~\cite{shahriari2015taking}, Knowledge Gradient (KG)~\cite{frazier2018tutorial}, Thompson Sampling (TS)~\cite{shahriari2015taking}, and pure variance maximization (VAR). The Levy benchmark is a multi-modal oscillatory landscape with a single global maximum surrounded by sinusoidal ridges. Inputs are normalized to $[0,1]^2$, and the global maximum appears near the centre of Fig.~\ref{fig:Levy_gp_grid}. The global maximum is not the highest-curvature point of the domain. So, curvature-aware AFs are not artificially favored. Results are averaged over 25 random seeds. The full experimental setup is given in Appendix C.}


 {Fig.~\ref{fig:Levy_curves} reports simple regret and normalized GP MSE versus BO iteration for 25 seeds, and Fig.~\ref{fig:Levy_gp_grid} shows each method's final GP posterior (for one realization) alongside the true Levy landscape. Only KG, VAR and aEFE recover the true underlying function (top-left panel of Fig.~\ref{fig:Levy_gp_grid}), while UCB, EI, PI and TS over-concentrate near the optimum and miss the oscillatory ridges. Among the function-recovering methods, VAR and KG over-explores and pays for it in regret (Fig.~\ref{fig:Levy_curves}), whereas aEFE places more queries around the the optimum (red star in Fig.~\ref{fig:Levy_gp_grid}), attaining the lowest final simple regret while continuing to reduce its GP MSE (Fig.~\ref{fig:Levy_curves} ). This demonstrates that EFE is well-suited for joint optimization and learning.}

\section{CONCLUSIONS}
EFE has a strong potential as a next‑generation BO acquisition as it unifies exploration and exploitation in a principled way. Taking inspiration from this, we introduced a novel curvature-aware EFE acquisition function for BO for the joint learning and optimization problem. We mathematically showed that UCB and LCB are linear surrogates of EFE, and that EFE has unbiased convergence guarantees for concave functions. Our simulation results show that EFE is very competitive in comparison with  {widely-used AFs}. Future work can develop scalable, MPC‑style multi‑step EFE planners that propagate GP beliefs along candidate action sequences, enabling non‑myopic query selection that anticipates the full evolution of future posterior states. 
 {Beyond this multi-step extension, three further directions are natural. First, the Gaussian preference may be generalized to exponential-family preferences without changing the EFE decomposition. Second, while GPs are well suited to the low-to-moderate-dimensional controller-tuning and system-identification settings considered here~\cite{khosravi2021performance,chen2025fast}, higher-dimensional problems could use sparse or low-rank GP surrogates~\cite{garnett2023bayesian} as direct replacements. Third, the local convergence result in Theorem~\ref{thm:EFEconvergence} could be extended toward global multi-modal convergence using existing UCB-type regret bounds~\cite{srinivas2012information,bogunovic2016time}.}




\section*{APPENDIX}

\subsection{Evaluation of the Epistemic Value}

Let $f(x)\sim \mathcal N(\mu,\sigma^2)$ and $y=f(x)+\varepsilon$, with
$\varepsilon\sim\mathcal N(0,\sigma_n^2)$. Then,
\[
q(y\given x)=\mathcal N(y \given \mu,\sigma^2+\sigma_n^2),\quad
q(f(x)\given y)=\mathcal N(\mu^+,\sigma_+^2),
\]
where
$
\sigma_+^2=\frac{\sigma^2\sigma_n^2}{\sigma^2+\sigma_n^2}.
$
The KL divergence between posterior and prior ($\Phi = 
\mathrm{KL}\!\left(q(f(x)\given y)\,\|\,q(f(x))\right)$):
\[ \Phi
=
\frac12\left[
\frac{\sigma_+^2}{\sigma^2}
+
\frac{(\mu^+-\mu)^2}{\sigma^2}
-1
+
\ln\frac{\sigma^2}{\sigma_+^2}
\right].
\]

Taking expectation over $q(y\given x)$,
\[
\mathbb E_{q(y\given x)}
\!\left[
\Phi
\right]
=
\frac12\left[
\frac{\sigma_+^2}{\sigma^2}
+
\frac{\mathbb E[(\mu^+-\mu)^2]}{\sigma^2}
-1
+
\ln\frac{\sigma^2}{\sigma_+^2}
\right].
\]

Using the Kalman update
$
\mu^+=\mu+K(y-\mu),\quad
K=\frac{\sigma^2}{\sigma^2+\sigma_n^2},
$
we obtain
\[
\mathbb E[(\mu^+-\mu)^2]
=
K^2(\sigma^2+\sigma_n^2)
=
\frac{\sigma^4}{\sigma^2+\sigma_n^2}
=
\sigma^2-\sigma_+^2.
\]

Substituting yields
\[
\mathbb E_{q(y\given x)}
\!\left[
\mathrm{KL}\!\left(q(f(x)\given y)\,\|\,q(f(x))\right)
\right]
=
\frac12\log\left(1+\frac{\sigma^2}{\sigma_n^2}\right).
\]



\subsection{Evaluation of the Pragmatic Term}
Let $q(y\given x)=\mathcal N(y \given \mu,\sigma_y^2)$ and 
$p(y)=\mathcal N(y \given y^\ast,\tau^2)$.
The pragmatic term is the cross-entropy
$\mathbb E_{q(y\given x)}[-\ln p(y)]$.
Since $-\ln p(y) = (y-y^\ast)^2/ (2\tau^2) + \ln(2\pi\tau^2)/2$ and
\begin{align*}
\mathbb E_{q(y\given x)}[(y-y^\ast)^2]
&= \operatorname{Var}_{q(y\given x)}(y)+(\mathbb E_{q(y\given x)}[y]-y^\ast)^2 \\
&= \sigma_y^2+(\mu-y^\ast)^2,
\end{align*}
the cross-entropy is
\[
\mathbb E_{q(y\given x)}[-\ln p(y)]
=
\frac{(\mu-y^\ast)^2+\sigma_y^2}{2\tau^2}
+
\frac12\ln(2\pi\tau^2).
\]

\subsection{Simulation settings} \label{app:simulation}
 {The AFs in Sec.~\ref{sec:experiments}.B are evaluated under identical settings on 2-D Levy using BoTorch. The GP is a \texttt{SingleTaskGP} with an RBF-ARD kernel and an output \texttt{Standardize} transform, with length-scales and signal/noise variance refit online at every BO iteration by marginal-likelihood maximization. The noise is $\sigma_n=0.2$. The initial design has $4$ Sobol points, each run lasts $100$ iterations, and results are averaged over $25$ random seeds. UCB uses $\beta_t = 2\log\bigl(t^2\pi^2/(6\delta)\bigr)$ ~\cite{srinivas2012information}. EI, PI, KG (4 fantasies), TS (2048-Sobol pool argmax) and VAR follow standard BoTorch implementations. For aEFE we use $\tau_{\min}^2=\sigma_n^2 = 0.04$, $\tau_{\max}^2=4.0$, and an adaptive target $y^\ast=y_{\mathrm{best}}+0.5\,\sigma_{\max}$ . Each AF is maximized by multi-start L-BFGS ($5\times 128$ raw samples; KG: $3\times 64$). For an ARD-RBF GP, the Laplacian of $\mu$ has the closed form $\nabla^2\mu(x)=\sum_j \alpha_j\,k(x,x_j)\,[\sum_i (x_i-x_{j,i})^2/l_i^4-\sum_i 1/l_i^2]$ with $\alpha=(K+\sigma_n^2 I)^{-1}(y-m)$; $\alpha$ is precomputed once per BO iteration. The normalized GP MSE in Fig.~\ref{fig:Levy_curves} is evaluated on a fixed $4096$-Sobol grid as $\mathrm{MSE}_n=\mathbb{E}[(\mu(x)-f(x))^2]/\mathrm{Var}(f)$.} Supporting code: \url{https://github.com/biaslab/LCSS2026-EFEaqBO}



\bibliographystyle{IEEEtran}
\footnotesize
\bibliography{references}

@article{li2026pragmatic,
  title={Pragmatic Curiosity: A Hybrid Learning-Optimization Paradigm via Active Inference},
  author={Li, Yingke and Parashar, Anjali and Zhou, Enlu and Fan, Chuchu},
  journal={arXiv:2602.06104},
  year={2026}
}

@article{friston2010free,
  title={The free-energy principle: a unified brain theory?},
  author={Friston, Karl},
  journal={Nature Reviews Neuroscience},
  volume={11},
  number={2},
  pages={127--138},
  year={2010},
  publisher={Nature publishing group}
}

@article{wu2017exploiting,
  title={Exploiting gradients and {Hessians} in {Bayesian} optimization and {Bayesian} quadrature},
  author={Wu, Anqi and Aoi, Mikio C and Pillow, Jonathan W},
  journal={arXiv:1704.00060},
  year={2017}
}

@article{wu2017bayesian,
  title={Bayesian optimization with gradients},
  author={Wu, Jian and Poloczek, Matthias and Wilson, Andrew G and Frazier, Peter},
  journal={Advances in Neural Information Processing Systems},
  volume={30},
  year={2017}
}

@article{frazier2018tutorial,
  title={A tutorial on {Bayesian} optimization},
  author={Frazier, Peter I},
  journal={arXiv:1807.02811},
  year={2018}
}

@article{srinivas2012information,
  title={Information-theoretic regret bounds for {Gaussian} process optimization in the bandit setting},
  author={Srinivas, Niranjan and Krause, Andreas and Kakade, Sham M and Seeger, Matthias W},
  journal={IEEE Transactions on Information Theory},
  volume={58},
  number={5},
  pages={3250--3265},
  year={2012},
  publisher={IEEE}
}

@article{ip2026expected,
  title={Expected Improvement via Gradient Norms},
  author={Ip, Joshua Hang Sai and Makrygiorgos, Georgios and Mesbah, Ali},
  journal={arXiv:2601.21357},
  year={2026}
}

@article{kellyboba,
  title={{BOBA}: Dynamic {Bayesian} Optimization through {Bayesian} Active Inference},
  author={Kelly, Merlin and Patel, Rishan and Thomas, Alexander and Zhu, Ziyue and Quan, Zikun and Carlson, Tom and Cho, Youngjun}
}

@article{millidge2021whence,
  title={Whence the expected free energy?},
  author={Millidge, Beren and Tschantz, Alexander and Buckley, Christopher L},
  journal={Neural Computation},
  volume={33},
  number={2},
  pages={447--482},
  year={2021},
  publisher={MIT Press}
}

@article{shahriari2015taking,
  title={Taking the human out of the loop: A review of {Bayesian} optimization},
  author={Shahriari, Bobak and Swersky, Kevin and Wang, Ziyu and Adams, Ryan P and De Freitas, Nando},
  journal={Proceedings of the IEEE},
  volume={104},
  number={1},
  pages={148--175},
  year={2015},
  publisher={IEEE}
}

@article{khosravi2021performance,
  title={Performance-driven cascade controller tuning with {Bayesian} optimization},
  author={Khosravi, Mohammad and Behrunani, Varsha N and Myszkorowski, Piotr and Smith, Roy S and Rupenyan, Alisa and Lygeros, John},
  journal={IEEE Transactions on Industrial Electronics},
  volume={69},
  number={1},
  pages={1032--1042},
  year={2021},
  publisher={IEEE}
}

@article{chen2025fast,
  title={Fast Kernel-Based Regularized System Identification using {Bayesian} Optimization},
  author={Chen, Lujing and Chen, Tianshi and Andersen, Martin S},
  journal={IEEE Transactions on Automatic Control},
  year={2025},
  publisher={IEEE}
}

@article{popovic2024learning,
  title={Learning-based methods for adaptive informative path planning},
  author={Popovi{\'c}, Marija and Ott, Joshua and R{\"u}ckin, Julius and Kochenderfer, Mykel J},
  journal={Robotics and Autonomous Systems},
  volume={179},
  pages={104727},
  year={2024},
  publisher={Elsevier}
}

@article{friston2015active,
  title={Active inference and epistemic value},
  author={Friston, Karl and Rigoli, Francesco and Ognibene, Dimitri and Mathys, Christoph and Fitzgerald, Thomas and Pezzulo, Giovanni},
  journal={Cognitive Neuroscience},
  volume={6},
  number={4},
  pages={187--214},
  year={2015},
  publisher={Taylor \& Francis}
}

@book{garnett2023bayesian,
  title={Bayesian Optimization},
  author={Garnett, Roman},
  year={2023},
  publisher={Cambridge University Press}
}

@inproceedings{bogunovic2016time,
  title={Time-varying {Gaussian} process bandit optimization},
  author={Bogunovic, Ilija and Scarlett, Jonathan and Cevher, Volkan},
  booktitle={Artificial Intelligence and Statistics},
  pages={314--323},
  year={2016},
  organization={PMLR}
}

\end{document}